\documentclass{article}

\PassOptionsToPackage{square,sort,comma,numbers}{natbib}



\usepackage[final]{neurips_2023}


\usepackage[utf8]{inputenc} 
\usepackage[T1]{fontenc}    
\usepackage{hyperref}       
\usepackage{url}            
\usepackage{booktabs}       
\usepackage{amsfonts}       
\usepackage{nicefrac}       
\usepackage{microtype}      
\usepackage{xcolor}         

\usepackage{blindtext}
\usepackage{amsmath, latexsym}
\usepackage{mathtools}
\usepackage{dsfont}
\usepackage{amsthm}

\newtheorem{theorem}{Theorem}[section]

\newtheorem{definition}[theorem]{Definition}

\newtheorem{remark}[theorem]{Remark}

\newcommand{\R}[0] {\mathbb R}

\title{Implicit biases in multitask and continual learning from a backward error analysis perspective}

\author{%
  Benoit Dherin \\
  Google\\
  dherin@google.com\\
}

\begin{document}

\maketitle

\begin{abstract}
Using backward error analysis, we compute implicit training biases in multitask and continual learning settings for neural networks trained with stochastic gradient descent. In particular, we derive modified losses that are implicitly minimized during training. They have three terms: the original loss, accounting for convergence, an implicit flatness regularization term proportional to the learning rate, and a last term, the conflict term, which can theoretically be detrimental to both convergence and implicit regularization. 
In multitask, the conflict term is a well-known quantity, measuring the gradient alignment between the tasks, while in continual learning the conflict term is a new quantity in deep learning optimization, although a basic tool in differential geometry: The Lie bracket between the task gradients. 
\end{abstract}

\section{Introduction}

Overparameterized neural networks trained to interpolate are able to generalize surprisingly well in spite of the high complexity of their hypothesis space \cite{belkin_2021}. 
One key concept to understand this phenomenon is that of implicit regularization or implicit training biases, which are quantities that are not explicitly regularized in the loss during training but by other mechanisms, guiding the network toward simpler solutions \cite{dherin2022why,Dherin2021TheGO}.
Several groups \cite{barrett2021implicit,smith2021on,rosca2023on,rosca2021discretization,fraca2021symplectic_integration,gao2023diffusion_bea,li2017stochastic,lu2020resolution_ode, rosca2023implicit_regularization,miyagawa2022equation_of_motion,cattaneo2023implicit_bias_adam,barba2021federated} have recently used a technique called {\bf Backward Error Analysis} (BEA) to compute implicit biases
as a measure of the discrepancy between an optimizer iterates and the solutions of Gradient Flow (GF), which are the unique continuous paths of steepest descent.
Because of its flexibility BEA has been used to compute optimizer implicit biases in many settings: Gradient Descent (GD) \cite{barrett2021implicit}, Stochastic Gradient Descent (SGD) \cite{smith2021on,li2017stochastic}, Momentum \cite{ghosh2023implicit}, Adam and RMSProp \cite{cattaneo2023implicit_bias_adam}, GAN's \cite{rosca2021discretization,rosca2023implicit_regularization,lu2020resolution_ode}, and diffusion processes \cite{gao2023diffusion_bea}, among others
\cite{rosca2023on,fraca2021symplectic_integration,miyagawa2022equation_of_motion,barba2021federated}.

{\bf Our contribution:} We add to this body of work by computing implicit biases in multitask learning \cite{wang2023comprehensive_survey} and continual learning \cite{soutifcormerais2023comprehensive, lange2023continual} settings optimized with SGD. In both cases, the output of BEA is a modified loss implicitly minimized by the optimizer. It consists of the original loss plus additional terms, which can be split in two parts: 1) a beneficial implicit flatness regularizer proportional to the learning rate and already observed in single-task learning (in \cite{barrett2021implicit,smith2021on} using BEA as well as with other approaches in \cite{vardi2021implicit,damian2021label,ma2021linear_stability_of_sgd}), and 2) a conflict term, due to the presence of several tasks, and which can be detrimental to both convergence and implicit flatness regularization. In multitask, the conflict term is the inner product between the task gradients, creating an implicit propensity in the learning dynamics to seek misaligned task gradients, which is known to be detrimental and needs to be mitigated \cite{yu2020gradient_surgery,wang2021gradient_vaccine,lee2022sequential}. In continual learning, the conflict term is the Lie bracket \cite{lee2012smooth_manifolds} between the task gradients whose non-vanishing may possibly be related to catastrophic forgetting \cite{soutifcormerais2023comprehensive,lange2023continual} where the performance of previous tasks degrades as new ones are learned. We hope to foster interest on Lie brackets in optimization, which is one of the basic tools in differential geometry \cite{lee2012smooth_manifolds}.

\section{Background on backward error analysis}
\label{section:bea}

To illustrate BEA, we now derive an implicit bias of SGD after a single mini-batch update by adapting the derivation for full-batch GD from \cite{barrett2021implicit}. Consider the loss $L_B(\theta)$ computed on a batch of data $B$ from a dataset $D$. At a given step, the SGD iterate is $\theta' = \theta - h\nabla L_B(\theta)$, while the solution of GF (which exactly minimizes the batch loss) is the curve $\theta(t)$ solving the differential equation $\dot \theta(t) = -\nabla L_B(\theta(t))$ with $\theta(0) = \theta$. The {\bf discretization drift} is the difference between the two, i.e., $\|\theta(h) - \theta'\|$, and it is of order $\mathcal O(h^2)$ for GD (see \cite{hairer2006geometric} for details).
BEA proposes to compute a {\bf modified equation}, in the form of GF plus corrections in terms of powers of the learning rate
\begin{eqnarray}
\dot \theta = -\nabla L_B(\theta) + h f_1(\theta) + h^2 f_2(\theta) + \cdots
\end{eqnarray}
so that the solution $\tilde \theta(t)$ of the modified equation exactly coincides with the GD iterate: $\theta' = \tilde \theta(h)$. The idea of BEA is to use the continuous modified equation to analyze the discrete optimizer.
Note that if we truncate the modified equation at order $n$ (i.e. removing the terms of order $h^n$ and higher), the discretization drift becomes of order only $\mathcal O(h^{n+1})$.
 Let us compute $f_1$ following  \cite{barrett2021implicit}: First, we expand the solution of the modified equation in a Taylor series:
\begin{equation}
    \tilde \theta(h) = \theta - h\nabla_\theta L_B(\theta) 
    + h^2\left(f_1(\theta) + \frac 14 \nabla_\theta \|\nabla_\theta L_B(\theta)\|^2\right) + \mathcal O(h^3)
\end{equation}
For $\tilde \theta(h)$ to coincide with $\theta'= \theta - h\nabla L_B(\theta)$, we need all the terms in power of $h^2$ or higher to vanish. This gives us the first correction (and recursively the higher ones too as needed; see \cite{hairer2006geometric,barrett2021implicit,rosca2023on, miyagawa2022equation_of_motion}): $f_1(\theta) = -\nabla \left(\frac 14 \|\nabla L_B(\theta)\|^2\right)$. Therefore the gradient update $\theta'$ follows a GF with drift only of order $\mathcal O(h^3)$ but for a modified loss $\widetilde L_B$, since the modified equation is of the form
\begin{equation}\label{eq:local_sgd_modified_loss}
\dot \theta = - \nabla\left( L_B(\theta) + \frac h4 \|\nabla L_B(\theta)\|^2\right) + \mathcal O(h^2),
\quad
\widetilde L_B(\theta) := L_B(\theta) + \frac h4 \|\nabla L_B(\theta)\|^2,
\end{equation}
The second term in this modified loss is a flatness bias, called {\bf Implicit Gradient Regularization} (IGR) in \cite{barrett2021implicit}, which prefers optimization paths with shallower slopes (i.e. lower gradients) guiding the trajectory toward flatter regions, very much in line with other flatness biases in SGD found by different means \cite{damian2021label,ma2021linear_stability_of_sgd,Blanc2019ImplicitRF,jastrzebski2021catastrophic}. 
We now turn to applying BEA to multitask and continual settings.

\section{Modified loss and implicit biases in Multitask learning settings}

Multitask learning trains a neural network jointly on several tasks hoping that knowledge gained from each task will transfer to the other tasks, helping generalization, and useful in case of data scarcity \cite{ruder2017overview,zhang2021survey}. However, it has been observed at times that learning multiple tasks at once can be detrimental, a circumstance attributed to the loss gradients for each task being misaligned \cite{yu2020gradient_surgery,wang2021gradient_vaccine,lee2022sequential,anguelov2020_pick_a_sign}. BEA shed some theoretical light on this since the implicit multitask dynamics of SGD given by its modified equation has a term, the conflict term, with propensity to guide the training in regions with misaligned gradients. In terms of losses, the simplest multitask setting corresponds to having two losses $L_1(\phi_1, \theta)$ for the first task
and $L_2(\phi_2, \theta)$ for the second task. The parameters $\theta$ correspond to the part of the network
that is shared between the two tasks, while $\phi_1$ and $\phi_2$ are the parameters corresponding to the two task heads. 
The training setup is to devise a global loss 
\begin{equation}\label{eq:multitask_loss}
    L_{\alpha, \beta}(\phi_1, \phi_2, \theta) := \alpha L_1(\phi_1, \theta) + \beta L_2(\phi_2, \theta)
\end{equation}
consisting on a weighted average of the two losses and update the network parameters with 
\begin{eqnarray}
\omega' = \omega - h \nabla_\omega L_{\alpha, \beta}(\phi_1, \phi_2, \theta),
\label{global_update}
\quad\textrm{with}\quad \omega = (\phi_1, \phi_2, \theta).
\end{eqnarray}
Note that the update above can be considered either as a full-batch GD update as in \cite{barrett2021implicit}, or a single-step batch update in SGD within an epoch as in Section \ref{section:bea}.

\begin{theorem}
At any given SGD step the two-task iterate \eqref{global_update} follows an exact GF $\dot \omega = -\nabla_\omega \widetilde L_{\alpha, \beta}(\omega)$ with a modified loss
\begin{equation}\label{eq:two_task_mod_loss}
\widetilde L_{\alpha, \beta}  =  L_{\alpha, \beta} 
+ \frac{h\alpha^2}4 \|\nabla_{\omega_1} L_1 \|^2 
+ \frac{h\beta^2}4  \|\nabla_{\omega_2} L_2 \|^2 
+ \frac{h\alpha\beta}2  \langle \nabla_\theta L_1, \nabla_\theta L_2 \rangle, 
\end{equation}
with discretization drift $\|\tilde \omega(h) - \omega'\|$ of order $\mathcal O(h^3)$, where $\tilde \omega$ is the solution of the modified GF starting at $\omega$ and where $\omega_i:=(\phi_i, \theta)$ for $i=1,2$,.
\end{theorem}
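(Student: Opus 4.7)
The plan is to reduce to the single-task BEA computation recalled in Section \ref{section:bea} by treating $L_{\alpha,\beta}$ as a single scalar loss on the full parameter $\omega = (\phi_1,\phi_2,\theta)$, and then to unpack the resulting IGR term $\frac{h}{4}\|\nabla_\omega L_{\alpha,\beta}\|^2$ block-wise to reveal the inner-product structure in the shared parameters.

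First, I would apply the derivation from equation \eqref{eq:local_sgd_modified_loss} verbatim to the loss $L_{\alpha,\beta}(\omega)$: writing the modified equation as $\dot\omega = -\nabla_\omega L_{\alpha,\beta}(\omega) + h f_1(\omega) + \mathcal{O}(h^2)$, Taylor-expanding its solution $\tilde\omega(h)$ to second order, and matching with the update $\omega' = \omega - h \nabla_\omega L_{\alpha,\beta}(\omega)$, one obtains $f_1 = -\nabla_\omega\bigl(\tfrac14 \|\nabla_\omega L_{\alpha,\beta}\|^2\bigr)$ and hence a modified loss $L_{\alpha,\beta} + \frac{h}{4}\|\nabla_\omega L_{\alpha,\beta}\|^2$ whose associated GF matches the SGD iterate up to drift $\mathcal{O}(h^3)$. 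Nothing about multiple tasks enters yet; the content is purely that Section \ref{section:bea} applies to any smooth scalar loss.

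Second, I would expand the squared-norm correction block-wise according to the split $\omega = (\phi_1,\phi_2,\theta)$. The task heads are disjoint, so the $\phi_1$-block of $\nabla_\omega L_{\alpha,\beta}$ is $\alpha \nabla_{\phi_1} L_1$, the $\phi_2$-block is $\beta \nabla_{\phi_2} L_2$, and, crucially, the shared-parameter block is $\alpha\nabla_\theta L_1 + \beta\nabla_\theta L_2$. Squaring and summing gives
\begin{equation*}
\|\nabla_\omega L_{\alpha,\beta}\|^2 = \alpha^2\|\nabla_{\phi_1} L_1\|^2 + \beta^2\|\nabla_{\phi_2} L_2\|^2 + \alpha^2\|\nabla_\theta L_1\|^2 + \beta^2\|\nabla_\theta L_2\|^2 + 2\alpha\beta\langle \nabla_\theta L_1, \nabla_\theta L_2\rangle,
\end{equation*}
and regrouping $\|\nabla_{\phi_i} L_i\|^2 + \|\nabla_\theta L_i\|^2 = \|\nabla_{\omega_i} L_i\|^2$ for $i=1,2$ yields exactly the four-term expression \eqref{eq:two_task_mod_loss}. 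The cross-term survives only from the shared block, which is precisely why it measures gradient alignment on $\theta$ alone.

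There is no genuine obstacle; the only thing to be careful about is the bookkeeping in the block decomposition — making sure that the $\phi_i$-contributions combine with the $\theta$-contributions of the same task to reconstitute $\|\nabla_{\omega_i} L_i\|^2$, and that the $\phi_1,\phi_2$ blocks, being disjoint, contribute no cross-term. Once this is in place, identifying the last summand as the conflict term $\langle\nabla_\theta L_1,\nabla_\theta L_2\rangle$ is immediate, and the discretization-drift bound is inherited from the single-task BEA of Section \ref{section:bea}.
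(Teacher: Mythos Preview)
Your proposal is correct and follows exactly the paper's approach: the paper's proof is a one-line appeal to applying the single-task modified loss \eqref{eq:local_sgd_modified_loss} to the multitask loss \eqref{eq:multitask_loss}, and you have simply spelled out the block-wise expansion of $\|\nabla_\omega L_{\alpha,\beta}\|^2$ in more detail than the paper does.
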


\begin{proof}
This follows from an immediate application of the supervised modified loss and modified equation in \eqref{eq:local_sgd_modified_loss} to the special form of the multitask loss \eqref{eq:multitask_loss}.
\end{proof}

\paragraph{Interpretation:} The modified loss \eqref{eq:two_task_mod_loss} has two implicit biases: a IGR term and a conflict term 
\begin{equation*}
\mathrm{IGR} 
= \frac{\alpha^2h}4 \|\nabla_{\omega_1} L_1\|^2
+ \frac{\beta^2h} 4 \|\nabla_{\omega_2} L_2\|^2,
\quad\quad
\mathrm{conflict} = 
\frac{h\alpha\beta}2  \langle \nabla_\theta L_1, \nabla_\theta L_2 \rangle.
\end{equation*}

\emph{The IGR term is beneficial:} It consists of the sum of two implicit flatness regularizers for each task loss proportional to the learning rate $h$ as in the single-task case, where it has been shown  to be beneficial, guiding optimization paths toward flatter regions with greater generalization power \cite{barrett2021implicit,smith2021on}.  
\emph{The conflict term  can be detrimental:} The implicit dynamics from the multitask modified equation encourages this term to become negative possibly at the expense of the original losses or the IGR terms. Regions where the conflict term can be negative are also regions where the gradients of the two losses w.r.t. the shared parameters are in opposite directions, creating smaller updates for the shared parameters, resulting in possibly stalled learning. It turns out that mechanisms preventing this conflict term to become negative (e.g. by projection \cite{wang2021gradient_vaccine,anguelov2020_pick_a_sign,yu2020gradient_surgery} or direct regularization \cite{lee2022sequential}) have been identified and used successfully to improve train and test performance in multitask settings.

\section{Modified loss and implicit biases in continual learning settings}

Continual learning is concerned with learning from a data distribution that is changing over time with tasks corresponding to locally stationary phases of the evolution \cite{soutifcormerais2023comprehensive,lange2023continual}. One of its major issues is catastrophic forgetting, when the updates from latter tasks degrade the performance on earlier tasks. While catastrophic forgetting is an issue for all modern approaches \cite{soutifcormerais2023comprehensive,lange2023continual, prabhu2020GDumb}, its causes remain unclear. As we will see, the BEA modified equation for continual learning may help shed some new light on this issue.
Namely, consider the continual learning setting where we perform two successive SGD updates $\theta_1 = \theta_0  - h \nabla L_1(\theta_0)$ and $\theta_2 = \theta_1  - h \nabla L_2(\theta_1)$, with the two losses computed on two successive batches from a changing data distribution.
Using BEA, we want to first compute a modified loss whose continuous minimization approximates well the two successive updates. Then we want to identify possibly detrimental terms in the modified equation that may be responsible for a decreased performance on the first batch by the second update. It turns out that such a detrimental term pops up, controlled by the Lie bracket of the two batch gradients:
\begin{definition}
Given two vector fields on $\R^n$, that is, two differentiable functions $F, G:\R^n\rightarrow \R^n$, their Lie Bracket is the vector field $[F, G]:\R^n\rightarrow \R^n$ defined as follows
\begin{equation}
    [F, G] (\theta) = \nabla G(\theta) F(\theta) - \nabla F(\theta) G(\theta),
\end{equation}
where $\nabla G(\theta)$ and $\nabla F(\theta)$ are the Jacobians of the vector fields.
\end{definition}
Lie brackets are fundamental tools in differential geometry \cite{lee2012smooth_manifolds}.
They help quantify how flows intertwine. For instance, if the Lie bracket between loss gradients for different tasks vanish, i.e.,  $[\nabla L_1, \nabla L_2]=0$, this implies that their gradient flows \emph{commute}: Following the gradient flow of first $L_1$ and then $L_2$ yields the same result as the reverse \cite{lee2012smooth_manifolds}, with their flows somehow spanning "non-interacting" subspaces. 
The next theorem states that when this happens two consecutive SGD updates as above can be approximated by GF for a modified loss of the form:
\begin{equation}\label{modified_loss}
    \widetilde L_{1,2}(\theta) = L_1(\theta) + L_2(\theta) + \frac h4 \|\nabla L_1(\theta)\|^2 + \frac h4 \|\nabla L_2(\theta)\|^2,
\end{equation}
where the IGR terms encourage the learning trajectory toward flatter regions for each task. Note that flatness preservation between tasks seems helpful to combat catastrophic forgetting \cite{sanket2023pretraining_in_lifelong_learning,shi2023create}. However, when $[\nabla L_1, \nabla L_2]\neq 0$, a term of order $h$   in the modified equation (theorem below) and proportional to the Lie bracket can potentially disrupt that implicit flatness regularization induced by the modified loss above.  Since it is the only term of order $h$  that can do so, we conjecture that the non-vanishing of the Lie bracket between loss gradients pertaining to  different tasks may be linked to catastrophic forgetting in continual learning. The following theorem gives an exact description of how this Lie bracket affects the implicit gradient regularization dynamics:

\begin{theorem}
Consider two consecutive mini-batch gradient descent updates $\theta_1$ and $\theta_2$ as above. The solution $\tilde \theta(t)$ of the modified equation 
\begin{equation}\label{modified_equation}
   \dot \theta (t) = - \nabla  \widetilde L_{1,2}(\theta(t)) + \frac h2 [\nabla L_1, \nabla L_2](\theta(t)) + O(h^2),
\end{equation}
where $ \widetilde L_{1,2}$ is the modified loss in Equation \eqref{modified_loss} follows the composition iterate $\theta_2$ with discretization drift $\|\tilde \theta(h) - \theta_2\|$ of order $\mathcal O(h^3)$.
\end{theorem}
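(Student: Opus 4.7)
The plan is to carry out a direct Taylor expansion on both sides and match coefficients up to order $h^2$, exactly in the spirit of the single-step derivation recalled in Section \ref{section:bea}, but tracking the asymmetry created by composing two different gradient updates.

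First I would expand the composed iterate. Substituting $\theta_1 = \theta_0 - h\nabla L_1(\theta_0)$ into the second update and Taylor-expanding $\nabla L_2$ around $\theta_0$ gives
\begin{equation*}
\theta_2 = \theta_0 - h\bigl(\nabla L_1 + \nabla L_2\bigr)(\theta_0) + h^2\,\nabla^2 L_2(\theta_0)\,\nabla L_1(\theta_0) + \mathcal{O}(h^3).
\end{equation*}
The crucial observation is that the $h^2$ coefficient, $\nabla^2 L_2 \nabla L_1$, is asymmetric in the two losses: it involves the Hessian of $L_2$ acting on the gradient of $L_1$, but not the other way around. This asymmetry is precisely what any purely gradient-driven modified equation must fail to reproduce, and it is what the Lie bracket correction is designed to supply.

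Next I would expand the solution $\tilde\theta(h)$ of the modified equation \eqref{modified_equation}. Writing the right-hand side as $V(\theta) + hW(\theta) + \mathcal{O}(h^2)$ with $V = -\nabla(L_1 + L_2)$ and $W = -\tfrac14\nabla\|\nabla L_1\|^2 - \tfrac14\nabla\|\nabla L_2\|^2 + \tfrac12[\nabla L_1,\nabla L_2]$, and using $\ddot{\tilde\theta}(0) = \nabla V(\theta_0)\,V(\theta_0) + \mathcal{O}(h)$, one gets
\begin{equation*}
\tilde\theta(h) = \theta_0 - h\,\nabla(L_1+L_2)(\theta_0) + h^2\Bigl( W(\theta_0) + \tfrac12\,\nabla^2(L_1+L_2)\,\nabla(L_1+L_2)(\theta_0)\Bigr) + \mathcal{O}(h^3).
\end{equation*}

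Matching the two expansions reduces the theorem to the algebraic identity
\begin{equation*}
W + \tfrac12\,\nabla^2(L_1+L_2)\,\nabla(L_1+L_2) \;=\; \nabla^2 L_2\,\nabla L_1.
\end{equation*}
Using the pointwise identity $\tfrac12\nabla\|\nabla L_i\|^2 = \nabla^2 L_i\,\nabla L_i$ (from symmetry of the Hessian) and the definition $[\nabla L_1, \nabla L_2] = \nabla^2 L_2\,\nabla L_1 - \nabla^2 L_1\,\nabla L_2$, I would expand $W$ and the symmetric composition term and check that the diagonal Hessian-gradient terms cancel in pairs while the cross terms $\nabla^2 L_1\nabla L_2$ and $\nabla^2 L_2\nabla L_1$ recombine to leave precisely $\nabla^2 L_2\,\nabla L_1$.

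The only real obstacle is bookkeeping: one must be careful that the $\tfrac12[\nabla L_1,\nabla L_2]$ coefficient in the modified equation is chosen exactly so as to cancel the symmetric $\tfrac12(\nabla^2 L_1\nabla L_2 + \nabla^2 L_2\nabla L_1)$ arising from $\tfrac12 \nabla V\,V$ and leave the asymmetric $\nabla^2 L_2\nabla L_1$ demanded by the composed iterate. Once this cancellation is verified, the matching holds to order $h^2$, so the discretization drift is $\mathcal{O}(h^3)$ as claimed.
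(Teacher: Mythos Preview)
Your proposal is correct and follows essentially the same approach as the paper: Taylor-expand the composed iterate and the solution of the modified equation to order $h^2$, then match coefficients using the identities $\tfrac12\nabla\|\nabla L_i\|^2 = \nabla^2 L_i\,\nabla L_i$ and the definition of the Lie bracket. The only cosmetic differences are that the paper first treats general vector fields $F,G$ before specializing to $-\nabla L_1,-\nabla L_2$, and that it \emph{derives} the first-order correction $H_1$ from the matching condition rather than \emph{verifying} the one already stated in the theorem; the underlying computation is identical.
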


\begin{proof}
To simplify the notation, let us start with two consecutive Euler updates for general vector fields $F$ and $G$. First we consider an Euler update for the first vector field: $\theta_1 = \theta_0 + h F(\theta_0)$. Then, we compose this update with an Euler step in the direction of the second vector field $G$ and expand the result into a Taylor's Series:
\begin{eqnarray*}
\theta_2 
& = & \theta_1 + h G(\theta_1) \\
& = & \theta_0 + h F(\theta_0)  + h G(\theta_0 + h F(\theta_0))\\
& = & \theta_0 + h (F(\theta_0) + G(\theta_0)) + h^2 \nabla G(\theta_0)F(\theta_0) + \mathcal O(h^3).
\end{eqnarray*}
Now, we want to find a modified equation of the form
\begin{equation}
    \dot \theta = H_0(\theta) + h H_1(\theta) + h^2 H_2(\theta) + \cdots 
\end{equation}
whose solution starting at $\theta_0$ coincides with $\theta_2$ after time $h$. For that, we can compute the Taylor expansion of the modified equation solution and compare the powers in $h$ to obtain recursive formulas for the $H_i$'s. It is easy to verify the the first orders of the solution Taylor's Series are given by the following expression:
\begin{equation}
    \theta(h) = \theta_0 + h H_0(\theta_0) + h^2(H_1(\theta_0) + \frac12 \nabla H_0(\theta_0) H_0(\theta_0)) + \mathcal O (h^3).
\end{equation}
To have that $\theta_2 = \theta(h)$ at first order, we obtain the following condition: $H_0(\theta_0) = F(\theta_0) + G(\theta_0)$. This yields for the second order the following condition:
\begin{equation}
    H_1(\theta_0) + \frac12 \nabla H_0(\theta_0) H_0(\theta_0) = \nabla G(\theta_0) H(\theta_0).
\end{equation}
Using the first order condition and expanding, we immediately obtain
\begin{equation}
    H_1(\theta_0) = -\frac12\left(\nabla F(\theta_0) F(\theta_0) + \nabla G(\theta_0) G(\theta_0)\right) + \frac12[F, G](\theta_0),
\end{equation}
where the last term is the Lie bracket between $F$ and $G$ evaluated at $\theta_0$. Now if we specialized for the gradient fields $F = - \nabla L_1$ and $G = - \nabla L_2$, we obtain that
\begin{eqnarray}
H_0(\theta) & = & -\nabla(L_1 + L_2) \\
H_1(\theta) & = & -\nabla \left(\frac 14 \|\nabla L_1\|^2 + \frac 14 \|\nabla L_2\|^2\right) + \frac 12 [\nabla L_1, \nabla L_2],
\end{eqnarray}
which concludes the theorem. 
\end{proof}

\begin{remark} Observe that when the two losses $L_1$ and $L_2$  come from batches pertaining to the same task (i.e., close to i.i.d.)  their gradients are more likely to be aligned. By the anti-symmetry of the Lie bracket, $[\nabla L_1, \nabla L_2]$  is then more likely to be close to zero. However, when the data distribution changes, creating a sharp contrast between the two task loss-gradients then the Lie bracket $[\nabla L_1, \nabla L_2]$  is likely to be large. This seems to be relevant to the \emph{stability gap} noticed in \cite{lange2023continual}, when a large and sudden decrease in performance is  observed after the first update for the second task. 
\end{remark}

\section{Conclusion}
We computed implicit biases in multitask and continual learning optimized with SGD using backward error analysis. These biases are local, measuring the discrepancy between one step of SGD and gradient flow on the batch loss. In both cases we found a beneficial flatness bias proportional to the learning rate and preferring smaller slopes on the loss surface for each task along the learning trajectories similar to single-task supervised learning \cite{barrett2021implicit,smith2021on}. We also found a detrimental implicit bias in both cases (due to the presence of several tasks and which we called conflict term) that has the potential to steer the learning dynamics away from the flatter regions with higher generalization power. For multitask learning, the detrimental implicit bias is controlled by the inner product of the task loss-gradients $\langle \nabla L_1, \nabla L_2 \rangle$, which is a known key quantity in multitask learning already (e.g., \cite{yu2020gradient_surgery,wang2021gradient_vaccine,lee2022sequential,anguelov2020_pick_a_sign}). For continual learning the detrimental bias is a new quantity, the Lie bracket $[\nabla L_1, \nabla L_2]$ between the task loss-gradients measuring how much their respective gradient flows span independent regions of the parameter space. Despite their wide use in many areas of mathematics, Lie brackets are new to deep learning optimization to the best of our knowledge. We hope this work will help foster the use of backward error analysis in deep learning, and serve as a theoretical motivation to devise methods relying on Lie brackets in continual learning.

\newpage

\begin{ack}
We would like to thank Mihaela Rosca, Maxim Neumann, Michael Munn, Javier Gonzalvo, David Barrett, and Hossein Mobahi for helpful discussions and feedback. We would also like to thank Patrick Cole for his support.
\end{ack}

\medskip

\bibliographystyle{unsrt}

\newpage
\clearpage


\end{document}